\crefname{section}{§}{§§}
\Crefname{section}{§}{§§}
\newtheorem{remark}{Remark}
\newtheorem{definition}{Definition}
\newtheorem{corollary}{Corollary}
\gdef\@copyrightpermission{
 \begin{minipage}{0.3\columnwidth}
 \href{https://creativecommons.org/licenses/by/4.0/}{\includegraphics[width=0.90\textwidth]{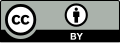}}
 \end{minipage}\hfill
 \begin{minipage}{0.7\columnwidth}
 \href{https://creativecommons.org/licenses/by/4.0/}{This work is licensed under a Creative Commons Attribution International 4.0 License.}
 \end{minipage}
 \vspace{5pt}
}
\begin{document}

\title{Logical Reasoning with Relation Network for Inductive Knowledge Graph Completion}

\author{Qinggang Zhang}
\affiliation{
  \institution{The Hong Kong Polytechnic University}
  \city{Hung Hom}
  \country{Hong Kong SAR}
  }
\email{qinggangg.zhang@connect.polyu.hk}

\author{Keyu Duan}
\affiliation{%
  \institution{National University of Singapore}
  \city{21 Lower Kent Ridge Road}
  \country{Singapore}
  }
\email{k.duan@u.nus.edu}

\author{Junnan Dong}
\affiliation{%
  \institution{The Hong Kong Polytechnic University}
  \city{Hung Hom}
  \country{Hong Kong SAR}
  }
\email{hanson.dong@connect.polyu.hk}

\author{Pai Zheng}
\affiliation{%
 \institution{The Hong Kong Polytechnic University}
 \city{Hung Hom}
  \country{Hong Kong SAR}
  }
\email{pai.zheng@polyu.edu.hk}

\author{Xiao Huang}
\affiliation{%
 \institution{The Hong Kong Polytechnic University}
\city{Hung Hom}
  \country{ Hong Kong SAR}
  }
\email{xiaohuang@comp.polyu.edu.hk}

\renewcommand{\shortauthors}{Qinggang Zhang, Keyu Duan, Junnan Dong, Pai Zheng, \& Xiao Huang}

\begin{abstract}

Inductive knowledge graph completion (KGC) aims to infer the missing relation for a set of newly-coming entities that never appeared in the training set. Such a setting is more in line with reality, as real-world KGs are constantly evolving and introducing new knowledge. Recent studies have shown promising results using message passing over subgraphs to embed newly-coming entities for inductive KGC. However, the inductive capability of these methods is usually limited by two key issues. $(i)$ KGC always suffers from data sparsity, and the situation is even exacerbated in inductive KGC where new entities often have few or no connections to the original KG. $(ii)$ Cold-start problem. It is over coarse-grained for accurate KG reasoning to generate representations for new entities by gathering the local information from few neighbors. To this end, we propose a novel i\underline{N}f\underline{O}max \underline{R}el\underline{A}tion \underline{N}etwork, namely NORAN, for inductive KG completion. It aims to mine latent relation patterns for inductive KG completion. Specifically, by centering on relations, NORAN provides a hyper view towards KG modeling, where the correlations between relations can be naturally captured as entity-independent logical evidence to conduct inductive KGC. Extensive experiment results on five benchmarks show that our framework substantially outperforms the state-of-the-art KGC methods.

\end{abstract}
\begin{CCSXML}
<ccs2012>
   <concept>
       <concept_id>10002951.10003227.10003351</concept_id>
       <concept_desc>Information systems~Data mining</concept_desc>
       <concept_significance>500</concept_significance>
       </concept>
 </ccs2012>
\end{CCSXML}

\ccsdesc[500]{Information systems~Data mining}
\keywords{Knowledge graph; message passing; logical reasoning}

\maketitle

\section{Introduction}
Knowledge  graphs (KGs) are a collection of factual information to describe a certain field of the real world~\cite{Bollacker-etal07Freebase,Lehmann-etal15DBpedia,Mahdisoltani-etal15YAGO,li2022kcube,li2022constructing,dong2023hierarchy}, which are often formulated as $(h, r, t)$ indicating there is a relation $r$ between the head entity $h$ and tail entity $t$.
Since KGs are usually incomplete~\cite{zhang2022contrastive,dong2023active}, as well as the expensive costs to collect facts manually, how to automatically perform knowledge graph completion (KGC) has been a widely-studied topic in the KG community~\cite{chen2020review,zhang2023integrating,ji2021survey}.
Currently, embedding-based methods~\cite{bordes2013translating,yang2014embedding,lin2015learning,trouillon2016complex,kazemi2018simple} play a dominant role in KGC. The typical methods embed the entities and relations into low-dimensional spaces with various objectives and then predict the relation between any two entities. However, most embedding-based methods are merely applicable under transductive settings, where KGC models predict the missing relations among entities that are already present in the training set. When some unseen entities are newly added during testing, they have to retrain the whole KG embeddings for accurate inference, which is not feasible in practice due to the high update frequency and large sizes of real-world KGs.

Recently, there has been increasing interest in KGC under inductive setting~\cite{wang2019logic,galkin2021nodepiece,wang2021relational}, where KGC models are used to infer the relations for newly-coming entities that have never appeared in the training set. The inductive KGC is more reflective of real-world scenarios, as KGs always keep introducing new information, such as new users and products in e-commerce KGs and new molecules in biomedical KGs. 
Some efforts have been made to obtain the inductive embeddings for new entities using external resources, such as entity attributes~\cite{hao2020inductive,qin2020generative}, textual descriptions~\cite{shi2018open,tang2019knowledge} and ontological schema~\cite{geng2021ontozsl}. However, these supplemental sources are often prohibitive to acquire, impeding their success in practice. An alternative solution is to induce entity-independent rules from the KG in either statistical~\cite{galarraga2013amie,meilicke2018fine} or  differentiable manners~\cite{yang2014embedding,yang2017differentiable,sadeghian2019drum}.  It regards inductive KG completion as a rule mining problem, i.e. predicting the relationship $r$ for any entity pair $(h, t)$ based on predefined or adaptively learned rules. However, these methods often suffer from scalability and generalizability issues since different KGs have distinct rules with domain knowledge involved. 

As the prosperity of graph neural networks (GNNs) and their strong inductive bias~\cite{zhou2023interest,chen2022neighbor,chen2020label,zhou2021temporal,chen2020graph}, many works extend message-passing (MP) strategy used in GNNs to multi-relational graphs for inductive KGC~\cite{schlichtkrull2018modeling,teru2020inductive,wang2021relational,wang2022exploring}. GraIL~\cite{teru2020inductive} is one of the pioneer work, which implicitly learns logical rules by reasoning over enclosing subgraphs. Specifically, GraIL first extracts an enclosing subgraph surrounding the target triple, and then generate the representations of entities by aggregating the structural information of its neighbors in the extracted subgraph. GraIL and its follow-up methods~\cite{chen2021topology,liu2021indigo,ontologyschema,lee2023ingram,jininductive,cui2022inductive,zhang2018link,zhang2020revisiting} have shown promising results using message passing over subgraphs for inductive KGC. However, the inductive capability of these methods is usually limited by two key issues. $(i)$ KGC always suffers from data sparsity, and the situation is even more difficult in inductive KGC, where the new emerging entities usually have few or even no connections to the original KG. $(ii)$  Existing message passing methods assume that messages are associated with entities and are passed from nodes to nodes iteratively~\cite{xu2018powerful}. Such entity-oriented message-passing strategy weakens the role of relation semantics~\cite{wang2021relational}, which violates the nature of inductive KGC since the learned logical evidence should be entity-independent.

In real-world scenarios, the enclosing subgraph of a particular triple in the KG contains the logical evidence needed to deduce the relation between the target nodes~\cite{Teru2020InductiveRP}.  
Taking Figure~\ref{fig:illustration} $(i)$ as an example, learning the relation pattern among \textit{Elon Musk}, \textit{TESLA}, and \textit{California State} can help the KGC model make a confident prediction of \textit{(Martin Eberhard, :Livein, California State)} 
 since ``\textit{:LiveIn} $\simeq$ \textit{:WorkAt} $\land$ \textit{:LocatedIn}''. To this end, we propose a novel message passing neural framework, i.e., NORAN, which aims to mine relation semantics for inductive KG completion. 
 Specifically, we first provide a deep insight of formulating the problem to bridge the gap between embedding-based paradigm and inductive setting.
 Inspired by the findings, we propose \textit{relation network}, a novel relation-induced graph derived from the original KG. 
The reconstructed relation network provides a hyper view towards KG modeling by centering relations. Its topological structure clearly reflects the distribution of relations, i.e., each node in the relation network represents a relational instance and the links among the nodes reflect the relation correlation. Thus, modeling the relation network via message-passing networks can naturally capture relation patterns as entity-independent context information to conduct inductive KGC. In this paper, we formally define such context information as logic evidence and reveal its correlations with our relation network. 
Our main contribution is summarized as follows:
\begin{itemize}
   \item We propose a novel  framework, i.e., NORAN, which aims to mine latent relation semantics for inductive KG completion.
    \item 
    Centering on relations, we propose a hyper view towards KG modeling, i.e., relation network, and formally define the inductive KGC as \textit{k-hop logic reasoning} over the hypergraph.
     \item 
     We propose a informax training objective to implicitly capture logic evidence from relation network for effective KGC.
    \item We conduct a detailed theoretical analysis to explore which kind of message-passing strategy is more effective and provide guidelines for model selection to enable inductive reasoning over the reconstructed relation network.

    \item 
    Extensive experiments on five real-world KG benchmarks demonstrate the superiority of our proposed NORAN.
\end{itemize}

\section{Preliminaries} \label{sec:related_work}
\subsection{Transductive KG Completion}
Given any pair of entities $(h, t)$, traditional KG completion model aims to predict the probability of relation $r$ linking $h$ and $t$, i.e. $p(r|h,t)$. The relation type with the highest probability will be the output after ranking. Some related work also equally formalizes the problem as a relation classification problem, which predicts the relation type with ${Softmax}_r(p(r|h,t))$.

Currently, the embedding-based paradigm dominates the transductive KGC problem. TransE~\cite{bordes2013translating}, 
DistMult~\cite{yang2014embedding}, TransR~\cite{lin2015learning}, ComplEx~\cite{trouillon2016complex}, and SimplE~\cite{kazemi2018simple} are the representative embedding-based methods, which embed entity and relations into RHS space with carefully designed modeling strategy. Embedding-based methods encompass two branches: $(i)$ translating-based methods, which interpret relations as translation operations on the low-dimensional embeddings of entities; $(ii)$ semantic matching methods, which compute the similarity between relations and entities with their semantic information instantiated by their low-dimensional representations. Despite its capability to learn expressive representations, the embedding-based paradigm is not applicable under inductive settings since they have to retrain the whole KG embeddings when some unseen entities are newly added during testing.

\subsection{Inductive KG Completion}
Different from transductive KGC, inductive KGC tries to predict the missing relations for a set of new entities that are not present in the training set. Given a knowledge graph $\mathcal{G} = \{\mathcal{E}, \mathcal{R}\} $  and a set of incoming entities $\mathcal{E}_{u} = \{ e \ |\ e \notin \mathcal{E}  \}$, our goal is to predict the missing relations among $\mathcal{E}$ and $\mathcal{E}_u$. The prediction task contains two scenarios: $(i)$ predict relations between $e_1 \in \mathcal{E}$ and $e_2 \notin \mathcal{E}$ with $P(r|e1,e2)$, namely \textit{inductive relation prediction}; $(ii)$ predict relations within unseen entities ($e \notin \mathcal{E}$), namely \textit{fully-inductive relation prediction}. Both two tasks emphasize on learning entity-independent logic evidence and typical representations of relations. In this work, we focus on the former setting, \textit{inductive relation prediction}, since the latter is within the scope of \textit{transfer learning} that leverages the information from one KG to the others.

\begin{figure*}[htp]
	\centering
	\includegraphics[width=0.94\textwidth]{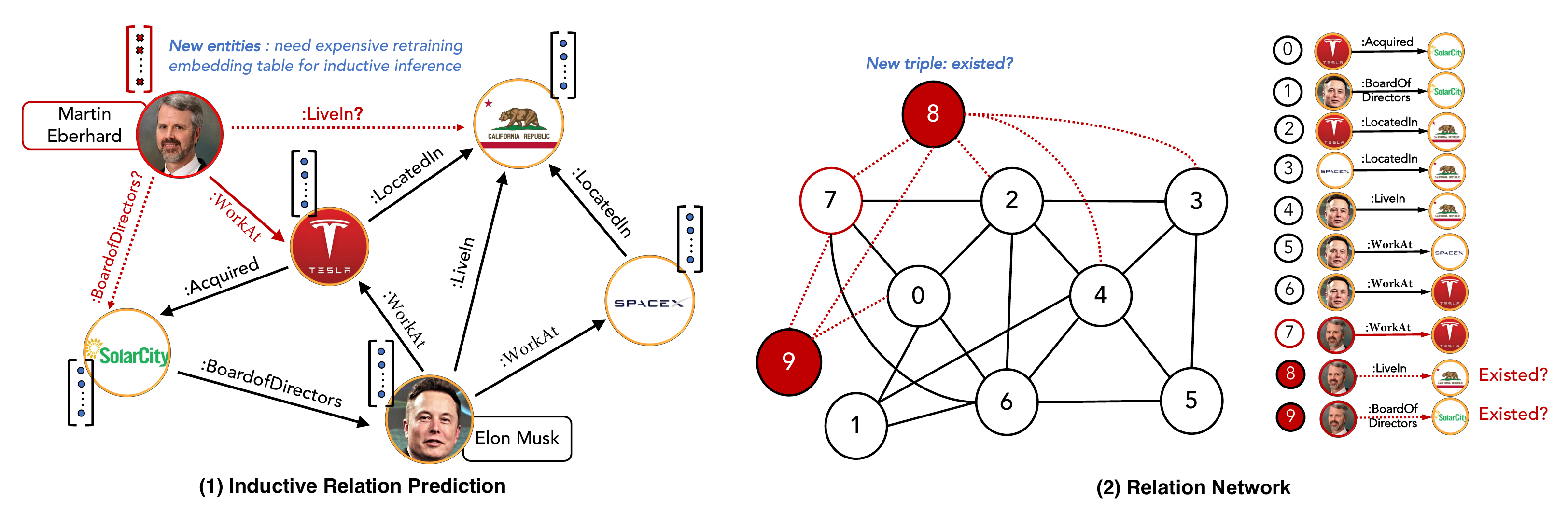}
	\vspace{-5mm}
	\caption{\textit{(i)} A toy example of inductive knowledge graph completion, i.e. predicting the relationship (``?'') for unseen entities, e.g. ``Martin Eberhard'', provided with a few links, e.g. (Martin, :WorkAt, TESLA); \textit{(ii)} Illustration of the corresponding \textit{relation network} for knowledge graph, which regards each triple as a relational node and thus could aggregate context information for inductive inference without expensive retraining look-up embedding tables as embedding-based paradigm.}\label{fig:illustration}
	\vspace{-5mm}
\end{figure*}

Under inductive settings, most of the previous effective works can be summarized as rule-induction based paradigm~\cite{yang2014embedding,yang2017differentiable,sadeghian2019drum,shengyuan2024differentiable}, which follows explicit or implicit logic evidence for knowledge graph reasoning. 
It is worth noting that this paradigm is compatible with embedding-based strategy, and most rule-induction-based methods are on the branch of the aforementioned \textit{semantic matching methods}. More recently, with the prosperity of GNNs, many works~\cite{wang2021relational,teru2020inductive,schlichtkrull2018modeling,yan2022cycle} extend the MP strategy to knowledge graphs. R-GCN~\cite{schlichtkrull2018modeling} firstly proposed a graph convolution operation for relational data. GraIL~\cite{teru2020inductive} and its follow-up methods~\cite{cui2022inductive,zhang2018link,ontologyschema,lee2023ingram,jininductive,zhang2020revisiting,geng2023relational,lin2022incorporating} applied the linking prediction techniques, i.e. subgraph extraction and labeling trick, to tackle the inductive KGC problem. 
MP-based methods benefit from their inductive bias but still require dedicated efforts to design the relational message-passing strategy with various options of combination and aggregation functions. By contrast, provided with extensive existing SOTA GNN architectures, our proposed framework is model-agnostic to take full advantage of them. Furthermore, the significance of relation semantics has been concurrently acknowledged in several recent studies~\cite{ontologyschema,lee2023ingram,jininductive}.
Notably, the relation network in our research diverges substantially from the relation graph used in these work. They introduced relation schema to quantify the interconnectivity among various types of relations for inductive KGC with new relations, while our proposed relation network aims to capture the contextual semantics between every pair of relational triples for KGC with previously unseen entities.

\begin{table}[!t]
	\caption{Unified notations for crucial terms.}
	\label{tab:notations}
	\centering
	\begin{tabular}{lr}
		\toprule
		Notation                    & Description                                   \\
		\midrule
		$\mathcal{G}$             & a knowledge graph                             \\
		$(h,r,t)$                 & (head entity, relation, tail entity)                   \\
		$\widetilde{\mathcal{G}}$ & a relation network                            \\
		$\Gamma(\cdot)$           & the combination function in relation network \\
		$\Psi^k$, \(\Omega^k\)    & $k$-layer GNN models                          \\
		\bottomrule
	\end{tabular}
	\vspace{-5mm}
\end{table}

\section{Problem Formulation}\label{sec:problem_formulation}
We unify the main notations used in this work and provide a brief view in Table~\ref{tab:notations}. 
We provide an insight into the gap between embedding-based paradigms and \textit{inductive} settings. Given a KG $\mathcal{G}=\{(h, r, t) \ |\ h,t\in \mathcal{E}, r\in \mathcal{R} \}$, the embedding-based paradigms formalize their model as $p(r|h,t)$, where predict the relationship $r$ with the prior knowledge of entities $h$ and $t$. However, under \textit{inductive} setting, the new entities trigger non-trivial perturbation to the distribution of the prior knowledge graph, thus biasing the model $p(r|h,t)$ from the prior ground truth $p(h,t)$. To this end, we propose a new formulation for the general KGC problem:
\begin{equation} \small
	p(h,r,t) = p(r|h,t)\cdot p(h,t),
\end{equation}
where we predict triple existence, i.e. $p(h,r,t)$, rather than the conditional probability $p(r|h,t)$. Such a formulation has several profits. Firstly, $p(h,r,t)$ is capable of capturing the evolving distribution of $p(h,t)$ under \textit{inductive} setting since no prior distribution is required. Meanwhile, for \textit{transductive} setting, where $p(h,t)$ is static, $p(h,r,t)$ can be collapsed to the previous modeling $p(r|h,t)$. From another perspective, $p(h,r,t)$ encompasses two crucial terms, relation type $p(r|h,t)$ and relation existence $p(h,t)$. Previous modeling $p(r|h,t)$ predicts the relation type $r$ between an entity pair $(h,t)$, which is identical to a classification problem. However, an ideal KGC model should not only predict the relation type but also induce the incomplete entity pairs $(h,t)$, while the latter one is unfortunately ignored in previous modeling. To this end, we propose \textit{relation network}, a novel view towards knowledge graph modeling. It augments the target KG into hyper-views by regarding each relational triple as nodes. Following this, the link prediction task, i.e., $p(r|h,t)$,  can be naturally transformed into predicting the existence of the corresponding node in the reconstructed relation network, i.e., $p(h,r,t)$.

\section{Methodology}\label{sec:methodology}
In this section, we propose NORAN to mine latent pattern among relational instances for inductive KG completion. As shown in Figure~\ref{fig:illustration}, our proposed NORAN consists of three key components: $(i)$ Relation network construction. It provides a new modeling of KGs by centering on relations. $(ii)$ Relational message passing, a relational message passing framework is defined to implicitly capture entity-independent logical evidence from the reconstructed relation network. $(iii)$ Training scheme and KG inference. Furthermore, we also provide a detailed theoretical analysis to explore which kind of MP strategy is more effective and provide guidelines for model selection to enable inductive reasoning over the reconstructed relation network.

\subsection{Relation Network Construction}\label{sec:relation_network}
Traditional KG representation learning methods usually model the KG as a heterogeneous graph by centering on the entities and regarding relations as semantic edges. However, it is hard to capture complex semantics from KG relations and further learn expressive representations for inductive KGC. To fill the gap between traditional KG representation learning paradigm and inductive KGC, in this work, we value the correlations between relations, accordingly provide a novel hyper view towards KG modeling with relations, and formally define the inductive KG completion as  \textit{k-hop logic reasoning} over an entity-independent network.

\begin{table*}[!ht]
	\caption{The general description framework for MP layers}
	\vspace{-5mm}
	\centering
	\resizebox{0.90\textwidth}{!}{
		\begin{tabular}{cclcc}
			\toprule
			MP layer                                & Motivation          & Convolutional Matrix $\mathcal{C}$                                                & Feature Transformation $f$                       & Category of $\mathcal{C}$ \\
			\midrule
			GCN~\cite{kipf2016semi}                & Spatial Convolution & $\mathcal{C} = \widetilde{\bm{D}}^{1/2}{(\bm{A}+\bm{I})}\widetilde{\bm{D}}^{1/2}$ & $f = \bm{W} \in \mathbb{R}^{d_k \times d_{k+1}}$ & Fixed                     \\
			GraphSAGE~\cite{hamilton2017inductive} & Inductive Learning  & $\mathcal{C} = \widetilde{\bm{D}}^{-1}(\bm{A}+\bm{I})$                            & $f = \bm{W} \in \mathbb{R}^{d_k \times d_{k+1}}$ & Fixed                     \\
			GIN~\cite{xu2018powerful}              & WL-Test             & $\mathcal{C} = \bm{A}+\bm{I}$                                                     & $f$ is a two-layer MLP                           & Fixed                     \\
			SGC~\cite{wu2019simplifying}           & Scalability         & $\mathcal{C} = \widetilde{\bm{D}}^{1/2}{(\bm{A}+\bm{I})}\widetilde{\bm{D}}^{1/2}$ & $f = lambda\ x: x$                                     & Fixed                     \\
			\midrule
			GAT~\cite{velivckovic2017graph}        & Self Attention      & $\begin{cases} \mathcal{C} = (\bm{A}+\bm{I})\cdot \mathcal{T} \\ \mathcal{T}_{(i,j)} = \frac{exp([\Theta \bm{x}_i || \Theta \bm{x}_j]\cdot \bm{a})}{\sum_{k \in \mathcal{N}(i) \cup i} exp([\Theta \bm{x}_i || \Theta \bm{x}_j]\cdot \bm{a})}  \end{cases}$                                                      & $f = \Theta \in \mathbb{R}^{d_k \times d_{k+1}}$ & Learnable                 \\
			\bottomrule
			\multicolumn{5}{l}{\small  $^a$ $\widetilde{\bm{D}}$ is the diagonal matrix of $(\bm{A}+\bm{I})$}
		\end{tabular}}
		\vspace{-5mm}
	\label{tab:framework_for_mp_1}
\end{table*}
\subsubsection{Relation network}
To better capture the latent logic evidence for effective reasoning, we first propose a novel view towards knowledge graph modeling that augments the target KG into hyper-views by compressing each triple into a relational node. Concretely, we perform a relation-induced construction process to build the \emph{relation network}. In essence, we follow one strategical criterion: regard each triple as relational nodes and link them in the relational graph if they share the same entity in the original KG. 

\begin{definition}\label{def:relational_graph}
	\textbf{Relation network.} Given a knowledge graph $\mathcal{G}=\{ (h,r,t) | h,t \in \mathcal{E}, r \in \mathcal{R}\}$, the corresponding relation network is the triple-level graph $\mathcal{G}_r = (\widetilde{\mathcal{V}}, \widetilde{\mathcal{A}}, \widetilde{\mathcal{X}_{r}})$, where $\widetilde{\mathcal{V}}$ and $\widetilde{\mathcal{A}}$ are the set of relational nodes and adjacency metric respectively. $\widetilde{\mathcal{X}_{r}} = \{ \Gamma(h,r,t) \ |\ (h,r,t) \in \mathcal{G} \}$ represents the feature matrix of $\widetilde{\mathcal{V}}$, where $\Gamma(\cdot)$ is the concatenation function that transforms each relational triple into a node, i.e., $v = \Gamma(h,r,t)$. $\widetilde{\mathcal{A}}(v, u|v,u \in \widetilde{\mathcal{V}}) = 1$, if $v$ and $u$ share the same entity in the original KG.
\end{definition}

According to this criterion, there are two latent construction particulars corresponding to two construction issues: $(i)$ the semantics of various `\textit{entity sharing}' patterns (Remark~\ref{remark: linking patterns} in Section~\ref{sec:ablation_relation_network}) and $(ii)$ the semantics of the `\textit{linking direction}' (Remark~\ref{remark: linking directions} in Section~\ref{sec:ablation_relation_network}). We conduct a detailed ablation study in Section~\ref{sec:ablation_relation_network} to explore different ways to construct the relation network.

\subsubsection{Interpretation with logic reasoning}~\label{sec:def_logicrule}
The topological graph structure of the relation network clearly reflects the distribution of relations and the link between different nodes reflects the relation correlations. Thus, modeling the relation network via a k-layer message-passing network can naturally capture relation patterns as entity-independent context information to conduct inductive KGC. In this paper, we formally define such context information as the \textit{k-hop logic evidence} over the relation network.

\begin{definition}\label{def:logic_rule}
	\textbf{K-hop logic evidence.} Let $\widetilde{\mathcal{G}} = (\widetilde{\mathbf{V}}, \widetilde{\mathcal{A}})$ be the relation network of a knowledge graph $\mathcal{G}=(\mathcal{E}, \mathcal{R})$. 	Given any center node $v_i \in \widetilde{\mathbf{V}}$, the k-hop ego graph $\widetilde{\mathcal{G}_i}=(\widetilde{\mathbf{V}}_i, \widetilde{\mathbf{A}}_i)$ centering at $v_i$ contains the relational contextual information for logical reasoning. In this paper, we model such contextual information as $\Lambda^{k}(v_i)$ via a k-layer message-passing network $\Omega$. Thus, the overall logic evidence of knowledge graph $\mathcal{G}$ can be modeled by traversing all k-hop ego graphs, i.e., $\Lambda(\mathcal{G}) = \Omega_{GNN}(\Lambda^{k}(v)\ |\ v \in \widetilde{\mathcal{G}})$. 

\end{definition}

Taking a logic evidence ``\textit{:Livein = :WorkAt $\land$ :Locatedin}'' in Fig.~\ref{fig:illustration}~$(i)$ as an instance, the relation pattern among \textit{Elon Musk}, \textit{TESLA}, and \textit{California State} is a typical instantiation of the logic evidence and the triple \textit{(Martin Eberhard, :LiveIn, California State)} can be predicted with relative certainty. Mapping to relation network of Fig.~\ref{fig:illustration}~$(ii)$, according to Definition~\ref{def:logic_rule}, such logic evidences are transformed into $k$-hop ego graphs centering on target nodes, which describe the relational context information for reasoning.

Generally, there are two crucial benefits of constructing relation network: $(i)$ Relation network provides a novel view towards KG modeling, from which entity-independent logic evidence can be naturally modeled via $k$-hop ego graphs that sampled from the relation network to conduct inductive KGC, without fine-grained embeddings for unseen entities.
$(ii)$ Compared with original KG, the relation network shows a more dense structure and is compatible with any GNN models. 

\subsection{Relational Message Passing}\label{sec:logic_rule_infomax}

To capture entity-independent contextual information as logical evidence, we propose a novel relational message-passing framework to conduct inductive KGC.

\subsubsection{Feature initialization}\label{sec:instantiations_of_gamma} 
Since every instance in the relation network is transformed from a corresponding triple $(h, r, t)$ in the original KG, we first randomly initialize the embedding of entities and relations in the original KG, and then adopt a local information modeling layer, i.e, a set of Bi-LSTM units, to learn the local relational structure within each triple:
\begin{equation} \small\small\label{equ:lstm}
	\bm{x}_i = \Gamma(h,r,t) = concat(\Phi(\bm{e}_h, \bm{e}_r, \bm{e}_t)),
\end{equation}
where $\Phi(\cdot)$ is a Bi-LSTM unit, and $\bm{e}_h, \bm{e}_r, \bm{e}_t$ are the initial embeddings of $h$, $r$, and $t$. The output triple embedding $\bm{x}_i$ could well capture the relational structure within the input triple. Thus, we used it as the initial node embedding in the relation network. Note that entity embeddings are \textit{fixed} during the training and inference.

\subsubsection{Message passing}
Message-passing layer is formulated as:
\begin{equation} \small
	\bm{X}^{(k+1)} = \sigma \big(\mathcal{C}^{(k)}\bm{X}^{(k)} \circ f^{(k)}\big),
\end{equation}
where $\bm{X}^{(k)}$ is the relational embedding at the k-th layer; $\mathcal{C}^{(k)}$ is the convolutional matrix for $k$-th layer; $f^{(k)} : \mathbb{R}^{d_k} \rightarrow \mathbb{R}^{d_{k+1}}$ is the linear transformation matrix; and $\sigma$ is the activation function.  
The reformulation of MP layers can be found on Table~\ref{tab:framework_for_mp_1}. Message passing over the relational instances enables the natural capture of relation patterns as entity-independent context information, which is crucial for conducting inductive KGC.

In this paper, we train two message-passing GNNs, i.e., $\Omega$ and $\Psi$: $\Omega$ is applied on the k-hop ego graph to extract logical evidence, while $\Psi$ is applied on \textit{relation network} to learn relational embedding for inductive inference.

\subsubsection{Mutual information maximization}
Reasoning with logical evidence is crucial for inductive KGC, which imitates the inference process of human beings. However, as aforementioned, logic evidence is notoriously sophisticated for representation learning and is usually utilized to regularize reasoning in previous practice. In light of this, we propose \textit{Logic Evidence Information Maximization} (LEIM). At a high level, it is a training objective that tries to maximally preserve mutual information between logic evidence learned from k-hop ego graphs and relational semantics of center node.
\begin{equation} \small\small\label{equ:LEIM}
	\mathcal{L}_{\omega, \psi} = -\mathcal{I}_{\omega}(\Lambda(\mathcal{G}), \Psi^k(\widetilde{\mathcal{G}})),
\end{equation}
where $\Lambda(\mathcal{G}) = \Omega_{GNN}(\Lambda^{k}(v)\ |\ v \in \widetilde{\mathcal{G}})$ is the learned logic evidence,
 $\mathcal{I}(\cdot, \cdot)$ is the mutual information; $\Psi^k(\cdot)$ is a $k$-layer GNN; and $\omega, \psi$ are the trainable parameters for $\mathcal{I}$ and $\Psi$, respectively. The loss is minimized to find the optimal parameters $\hat{\psi}$ that maximally preserve the mutual information between logic evidence $\Lambda(\mathcal{G})$ and the relational embedding of target node $v$ learned with $\Psi^k(\widetilde{\mathcal{G}})$.
In practice, we adopt a Jensen-Shannon MI estimator (JSD)~\cite{hjelm2018learning,nowozin2016f,zhu2020transfer}:
\begin{multline}\label{equ:jsd_mi}
	\mathcal{I}^{JSD}_{\omega, \psi}(\Lambda, \Psi) = \mathbb{E}_\mathbb{P}\Big(-sp(-T_{\omega, \psi}(\Lambda^k(v), \Psi^k(v)))\Big) \\ - \mathbb{E}_{\mathbb{P}\times\mathbb{P}'}\Big(-sp(T_{\omega, \psi}(\Lambda^k(v'), \Psi^k(v)))\Big),
\end{multline}
where $v$ is the target node sampled from the relation network according to distribution $\mathbb{P}$, $v'$ is sampled from negative distribution $\mathbb{P}'=\mathbb{P}$, and $sp$ denotes softplus activation function. To encourage positive pairs, e.g. $(\Lambda(v), \Psi(v))$, and distinguishes negative pairs, e.g. $(\Lambda(v'), \Psi(v))$, where $v \neq v'$, we introduce $T_{\omega, \psi}(\Lambda, \Psi)$ to predict the correlation between $\Lambda$ and $\Psi$ inspired by GNN pretraining~\cite{zhu2020transfer,hu2020gpt}, formulated as: 
\begin{equation} \small
	T_{\omega, \psi}(\Lambda^k(v), \bm{x}_v) = \sum_{(p,q) \in \Lambda^k(v)}\log (f_\omega([\bm{h}_p || \bm{h}_q || \bm{x}_v])),
\end{equation}
where $\bm{h}_p$ and $\bm{h}_q$ are the output features of the source node $p$ and the target node $q$ generated from the k-hop logical ego graph $\Lambda^k(\mathcal{G})$, $\bm{x}_v = \Psi^k(v)$ is the relatioinal embedding of node $v$. 
$f:\mathbb{R}^{ |\bm{h}| + |\bm{h}| + |\bm{x}| } \to \mathbb{R}^{(0,1)}$ is a fully-connected discriminator to distinguish the positive pairs and negative pairs.

\begin{algorithm}[!t]
	\caption{Logical Reasoning with Relation Network}\label{alg:LEIM}
	\SetKwComment{Comment}{// }{}
	\SetKwInput{KwInit}{Init}
	\newcommand{\algrule}[1][.2pt]{\par\vskip.5\baselineskip\hrule height #1\par\vskip.5\baselineskip}
	\SetKw{Inference}{Inductive inference}
	\Comment{All notations are consistent with Table~\ref{tab:notations}}
	\KwIn{a Knowledge Graph $\mathcal{G}$}
	\KwOut{a combination function $\Gamma$ and a GNN model $\Psi$}
	\KwInit{create the relation network $\widetilde{\mathcal{G}}$}
	\While{not convergence}{
	\ForEach{batch $\mathcal{B}=(\mathcal{V}_\mathcal{B}, \mathcal{E}_\mathcal{B})$ from $\mathcal{\widetilde{G}}$}{
	$\bm{X}_\mathcal{B} := \Gamma(\mathcal{V}_\mathcal{B})$ \Comment*[r]{Eq.\eqref{equ:lstm}}
	$\bm{X}_\mathcal{B}:=\Psi(\bm{X}_\mathcal{B}, \mathcal{E}_\mathcal{B})$

	$\mathcal{L}=-\mathcal{I}_{\omega}(\Lambda^k(\mathcal{G}),\bm{X}_\mathcal{B})$ \Comment*[r]{Eq.\eqref{equ:jsd_mi}}

	$\mathcal{L}$.optimize()
	}
	}
\end{algorithm}

\subsection{Training Scheme and KG Inference}\label{sec:complete_alg}
\subsubsection{Training objective}
In this section, we briefly introduce our complete algorithm of \textit{NORAN} and present the training scheme in Algorithm~\ref{alg:LEIM}. After this, we obtain the model \(\Psi \circ \Gamma\) to infer expressive representations of inductive triples. 
Given the final relational embedding for an inductive triple learned from model \(\Psi \circ \Gamma\),
we conduct the link prediction with a classifier using \textit{logistic regression}, i.e. \(p(\bm{x}) = Sigmoid(\bm{w}^T\bm{x}+b)\). 

\subsubsection{Inductive inference and complexity analysis}
For inductive inference, given any triple \(t=(h,r,t)\) with an unseen entity $h$ or $t$, we first create a new node \(v=\Gamma(h,r,t)\) and update the original relation network \(\widetilde{\mathcal{G}}\) by adding node \(v\) to it. Then we predict the probability of triple existence with \(p\circ \Psi \circ \Gamma\).

In contrast with the \textit{Embedding-based Paradigm}, which requires retraining for inductive inference, our proposed framework only needs updating the relation network and then inferring with the trained model \(p\circ \Psi \circ \Gamma\). Formally, the inference time complexity of our proposed framework is:
\begin{equation*}
	\mathcal{O}(\underbrace{3bf^2}_{\Gamma} + \underbrace{bd^L f + bLf^2}_{\Psi}),
\end{equation*}
where \(b\) is the number of inductive triples; \(f\) is the hidden dimension assuming all embeddings (relation, entity, and triple) have the same dimension; \(d\) is the averaged node degree in relation network \(\widetilde{\mathcal{G}}\); and \(L\) is the number of layers of \(\Psi\). Notably, we omit the time complexity of \(p\) and relation network renewal, since both of which are smaller than the others by magnitudes. Therefore, the main inference time overhead lies on the message passing operation of \(\Psi\), which could exponentially grow with \(L\). However, based on our empirical results, a two-layer GNN could generally achieve the on-par performance given the naturality of the local view for GNNs. This is also consistent with the argument for deepening GNNs~\cite{chen2021bag}. 
\subsection{GNN Instantiations Selection}\label{sec:GNN_instantiations}

Our proposed framework is model-agnostic and compatible with all GNN frameworks. One can select the most effective GNN model by performing an exhausting search. However, it is expensive in terms of computing resources and not the focus of this work. In this section, we provide a valid conjecture on the selection of GNN models via a comprehensive theoretical analysis. Please note that our evaluation space focuses on intrinsic MP layers, regardless of other training tricks like residual connections and normalization~\cite{chen2021bag}. 
\subsubsection{General framework of MP layer}

Typically, based on the convolution matrix, there are two categories for representative MP layers: $(i)$ \textit{Fixed MP}: GCN~\cite{kipf2016semi}, GraphSAGE~\cite{hamilton2017inductive}, GIN~\cite{xu2018powerful}, SGC~\cite{wu2019simplifying}; $(ii)$ \textit{Learnable MP}: GAT~\cite{velivckovic2017graph}. The reformulation of typical MP layers can be found on Table~\ref{tab:framework_for_mp_1}. Recall that we actually train two GNN models during our training process: one is $\Psi$ that applied on \textit{relation network} for inductive inference; and the other is $\Omega$ that applied on \textit{logic evidence} for discriminating negative samples. We unify exactly the same architecture in our implementation for simplification.

\subsubsection{Influence distribution}
Our conjecture for model selection is motivated by a general question: \textit{what type of GNNs is capable of capturing the proper logic evidence for inductive inference}? Taking Fig.~\ref{fig:illustration} $(2)$ as an example, when we try to inference the relationship between \textit{Martin Eberhard} and \textit{California} (i.e. Node $8$), some existing logic evidence, \textit{Elon Musk $\xrightarrow{\text{:WorkIn}}$ TESLA $\xrightarrow{\text{:LocatedIn}}$ California} and \textit{Martin Eberhard $\xrightarrow{\text{:WorkIn}}$ TESLA} (i.e. Node $4$, $5$, and $7$) are more proper than the others. Therefore, in \textit{better} MP strategies, node $4$, $5$, and $7$ would have more \textit{influence} on the inference.
Related to ideas of sensitivity analysis and influence functions in statistics~\cite{xu2018representation}, we measure the sensitivity of node x to node y, or the influence of y on x with the concept of \textit{influence distribution}.

\begin{definition}\label{def:influence_dist}
	\textbf{Influence Distribution}: Given a graph $\mathcal{G}=(\mathcal{V}, \mathcal{E})$, let $\bm{h}_v^{(i)}$ be the hidden features of node $v$ at $i$-th layer of a $k$-layer GNN model $\Psi$. The influence score of node $u$ on node $v$ is defined as $\mathcal{S}(u, v) = \sum_{i,j}\Bigg[\frac{\partial \bm{h}_u^{(k)}}{\partial \bm{h}_v^{(0)}}\Bigg]_{(i, j)}$. Then the influence distribution is defined as $\mathcal{S}_v (u) = \mathcal{S}(u, v) / \sum_{i \in \mathcal{V}} \mathcal{S}(i, v)$, which is the normalization of the corresponding influence score over all nodes.
\end{definition}

\subsubsection{Theoretical analysis}
Previous studies have verified that influence distributions of common aggregation schemes are closely connected to random walk distributions~\cite{xu2018representation}.
Extending the theorem to general MP frameworks, we can get a corollary as follows.
\begin{corollary}\label{cor:influence_dist}
	Given a graph $\mathcal{G}=(\mathcal{V}, \mathcal{E})$, let $\bm{h}_v^{(i)}$ be the hidden features of node $v$ at $i$-th layer of a $k$-layer GNN model $\Psi$ whose MP can be rewritten as $\bm{X}^{(i+1)} = \sigma\big( \mathcal{C}^{(i)}\bm{X}^{(i)}\circ f^{(k)} \big)$. If the convolution matrix is pre-fixed, i.e. $\frac{\partial \mathcal{C}}{\partial \bm{X}^{(i)}} = \bm{0}$, the influence distribution of any node $v$ is in expectation equal to a biased $k$-step random walk distribution, where the transition matrix is $\mathcal{T}$. $\mathcal{C} = (\bm{A}+\bm{I})\cdot \mathcal{T}$.
\end{corollary}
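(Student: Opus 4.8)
The plan is to follow the jumping-knowledge analysis of Xu et al.~\cite{xu2018representation}, but to carry it out at the level of the general message-passing recursion $\bm{X}^{(i+1)} = \sigma(\mathcal{C}^{(i)}\bm{X}^{(i)}\circ f^{(i)})$ rather than for one specific aggregator. First I would fix the two nodes $u$ and $v$ and rewrite the update in per-node (row) form $\bm{h}_u^{(i+1)} = \sigma\big(\bm{W}^{(i)\top}\sum_{w}\mathcal{C}^{(i)}_{uw}\bm{h}_w^{(i)}\big)$, making explicit that the dependence of layer $i{+}1$ on layer $i$ is linear in the fixed convolution weights $\mathcal{C}^{(i)}_{uw}$ and passes through the shared transformation $\bm{W}^{(i)}$ and activation $\sigma$.

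Next I would differentiate through the $k$ layers by the chain rule. Here the hypothesis $\partial\mathcal{C}/\partial\bm{X}^{(i)}=\bm{0}$ plays the decisive role: because $\mathcal{C}$ is pre-fixed, differentiating never produces terms coming from the convolution matrix itself, so each layer contributes only the factor $\mathrm{diag}(\sigma')\,\bm{W}^{(i)\top}$ together with the scalar edge weights $\mathcal{C}^{(i)}_{uw}$. Unrolling the recursion then expresses the Jacobian $\partial\bm{h}_u^{(k)}/\partial\bm{h}_v^{(0)}$ as a sum over all length-$k$ walks $p=(v=w_0,w_1,\dots,w_k=u)$, where each walk contributes the scalar product $\big(\prod_{l=0}^{k-1}\mathcal{C}^{(l)}_{w_{l+1}w_l}\big)$ times the associated product of $\mathrm{diag}(\sigma')$ gates and transposed weight matrices.

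I would then form the influence score $\mathcal{S}(u,v)$ of Definition~\ref{def:influence_dist} by summing the Jacobian entries and pass to the expectation. Following~\cite{xu2018representation}, I would model the ReLU gates as Bernoulli variables active with a common probability and treat the random weight entries as identically distributed, so that in expectation the gate-and-weight product attached to every length-$k$ walk is the same constant. That constant factors out of the walk sum, leaving $\mathbb{E}[\mathcal{S}(u,v)]$ proportional to $\sum_p\prod_{l}\mathcal{C}^{(l)}_{w_{l+1}w_l} = (\mathcal{C}^k)_{uv}$. Since $\mathcal{C}=(\bm{A}+\bm{I})\cdot\mathcal{T}$ is supported only on edges and self-loops and carries the transition weights $\mathcal{T}$, the quantity $(\mathcal{C}^k)_{uv}$ is precisely the weight of the $k$-step walk from $v$ to $u$ under the biased transition matrix $\mathcal{T}$; the normalization in $\mathcal{S}_v(u)=\mathcal{S}(u,v)/\sum_i\mathcal{S}(i,v)$ then cancels the common constant and yields the claimed $k$-step random-walk distribution.

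The main obstacle is making the expectation step rigorous. Establishing that the activation-and-weight factor is, in expectation, identical across all length-$k$ walks, so that it cleanly factors out before normalization, is the delicate part, since it rests on the i.i.d.\ initialization and equal-activation assumptions inherited from~\cite{xu2018representation}. Equally important conceptually is verifying that the hypothesis $\partial\mathcal{C}/\partial\bm{X}=\bm{0}$ is exactly what legitimizes treating the walk weights $\mathcal{C}^{(l)}_{w_{l+1}w_l}$ as constants rather than differentiating them, as one would be forced to for a genuinely feature-dependent $\mathcal{T}$ such as GAT's attention; this is the crux that separates the fixed-MP case, to which the corollary applies, from the learnable-MP case.
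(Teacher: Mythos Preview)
Your proposal is correct and matches the paper's own proof almost step for step: unroll the Jacobian by the chain rule into a sum over length-$k$ walks, use the pre-fixed $\mathcal{C}$ assumption to pull the transition weights $\mathcal{T}_{w_{l+1}w_l}$ out as constants, model the ReLU gates as i.i.d.\ Bernoulli so that the activation-and-weight factor is path-independent in expectation, and then normalize. The only minor difference is that the paper writes $f^{(l)}$ as an $m$-layer MLP (to accommodate GIN) rather than a single $\bm{W}^{(i)}$, but the argument is otherwise identical.
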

\begin{proof}
Let $f^{(l)}(\bm{x}) = f_m^{(l)} \circ f_{m-1}^{(l)} \circ \cdots \circ f_{1}^{(l)}$ be a $m$-layer MLP without learnable bias, where $f_m^{(l)}(\bm{x}) = \sigma(\bm{W}_m^{(l)}\bm{x})$. let $\mathcal{C} = (\bm{A}+\bm{I})\cdot \mathcal{T}$, where $\mathcal{T}$ is a pre-fixed weighted matrix. For any layer, by chain rule, we have
\begin{equation*}
 \frac{\partial \bm{h}_u^{(l)}}{\partial \bm{h}_v^{(0)}} =  \frac{\partial f_m^{(l)}}{\partial f_{m-1}^{(l)}} \cdot \frac{\partial f_{m-1}^{(l)}}{\partial f_{m-2}^{(l)}} \cdots \frac{\partial f_{2}^{(l)}}{\partial f_{1}^{(l)}} \cdot \sum_{i \in \widetilde{\mathcal{N}}(\bm{u})} \mathcal{T}_{u,i}\frac{\partial \bm{h}_i^{(l-1)}}{\partial \bm{h}_v^{(0)}},
\end{equation*}
where $\frac{\partial f_m^{(l)}}{\partial f_{m-1}^{(l)}} = \text{diag}\big(\bm{1}_{f_{m-1}^{(l)}>0}\big)\cdot \bm{W}_{m-1}^{(l)}$. Further more, according to chain rule and the message passing strategy, we get
\begin{equation*}
 \frac{\partial \bm{h}_u^{(k)}}{\partial \bm{h}_v^{(0)}} = \sum_{p \in \mathcal{P}}^{|\mathcal{P}|}\prod_{l=k}^{1} \mathcal{T}_{v_p^l, v_p^{l-1}} \frac{\partial f_m^{(l)}}{\partial f_{1}^{(l)}},
\end{equation*}
where $\mathcal{P}$ is the set of paths of length $k+1$ from node $u$ to node $v$. We can express an entry of the derivative for path $p$ as:
\begin{equation*}
  \bigg[\frac{\partial \bm{h}_u^{(k)}}{\partial \bm{h}_v^{(0)}}\bigg]^{(i,j)}_p = \bigg(\prod_{l=k}^1 \mathcal{T}_{v_p^l, v_{p-1}^l}\bigg) \bigg( \sum_q^{\mathcal{Q}} z_q\prod_{l=k}^1 \prod_{t=m}^1 w_{q,t}^{(l)} \bigg),
\end{equation*}
where $\mathcal{Q}$ is the set of paths $q$ from the input neuron $i$ to the output neuron $j$. $w_{q,t}^{(l)}$ denotes the neuron of $\bm{W}_m^{(l)}$ on path $q$. $z_q \in \{0, 1\}$ denotes whether the path is activated. Under the assumption that all $z_q$s obeys the same Bernoulli distribution, i.e. $P(z_q=1) = \beta$, we have:
\begin{equation*}
 \mathbb{E}\bigg[\frac{\partial \bm{h}_u^{(k)}}{\partial \bm{h}_v^{(0)}}\bigg] = \underbrace{\beta \cdot \prod_{l=k}^1 \prod_{t=m}^1 \bm{W}_t^{(l)}}_{u,v\text{ irrelevant}} \cdot \bigg( \sum_{p\in \mathcal{P}}^{|\mathcal{P}|}\prod_{l=k}^{1} \mathcal{T}_{v_p^l, v_{p-1}^l} \bigg).
\end{equation*}
As the former term is $u,v$ irrelevant, after normalization, the influence distribution of $\mathcal{S}_v(u)$ is $\sum_{p\in \mathcal{P}}^{|\mathcal{P}|}\prod_{l=k}^{1} \mathcal{T}_{v_p^l, v_{p-1}^l}$, which is exactly a bias random walk distribution whose transition matrix is $\mathcal{T}$.
\end{proof}

It indicates that for those MP layers whose convolution matrix is pre-fixed, the influence distribution for nodes is only \textit{structure-relevant} and irrespective to node features. For graphs that lay stress on the topological structures, such MP strategies are beneficial. However, for our \textit{relation network} whose node contains rich relational semantics is of vital importance for inductive KGC, a pre-fixed convolution matrix is not desired. In other words, MP layers with a learnable convolution matrix are probable to capture the logic evidence and learn more expressive representations for inductive KGC.
We evaluate such conjecture in Section~\ref{sec:main_results} by comparing several representative convolution matrices, $(i)$ \textit{learnable}: GAT~\cite{velivckovic2017graph}; $(ii)$ \textit{fixed}: GraphSAGE~\cite{hamilton2017inductive}, GIN~\cite{xu2018powerful}. The empirical results further demonstrate our conjecture to some extent.

\section{Experiments}\label{sec:experiments}
This section empirically evaluates the proposed framework and presents its performance on five KG datasets. Our empirical study is motivated by the following questions:
\begin{itemize}
	\item \textit{\textbf{Q1}}~(\ref{sec:main_results}): How does our proposed framework perform in comparison with the strongest baselines, including traditional embedding-based, rule-based methods and other MP-based methods?
	\item \textit{\textbf{Q2}}~(\ref{sec:ablation_relation_network}): W.r.t. Remark~\ref{remark: linking patterns} and Remark~\ref{remark: linking directions}, is our proposed \textit{relation network} a valid and effective modeling for inductive KG completion?
	\item \textit{\textbf{Q3}}~(\ref{sec:ablation_I}). Is our proposed \textit{logic evidence information maximization} a more effective training objective than standard negative sampling for inductive KGC?
\end{itemize}

\subsection{Experimental Setup}

\subsubsection{Datasets}
We conduct experiments on five standard KG benchmarks: $(i)$ \textbf{FB15K-237} that is a subset of FB15K where inverse relations are removed; $(ii) $\textbf{WN18RR} that is a subset of WN18 where inverse relations are removed; $(iii)$ \textbf{NELL995} that is extracted from the $995$-th iteration of the NELL system containing general knowledge; $(iv)$ \textbf{OGBL-WIKIKG2} that is extracted from the Wikidata knowledge base; and $(v)$ \textbf{OGBL-BIOKG} that is created using data from a large number of biomedical repositories. 

\begin{table*}[t]
	\centering
	\caption{Main results of inductive KGC on five benchmark datasets. We underline the best results within each of the three categories and bold NORAN's results that are better than all baselines.}
	\vspace{-4mm}
	\label{tab:comparison_results}
	\resizebox{1.0\textwidth}{!}{
		\begin{tabular}{clcccccccccccccccc}
			\toprule
			\multirow{2}{*}{Categories} & Datasets                                & \multicolumn{3}{c}{FB15K-237} & \multicolumn{3}{c}{WN18RR} & \multicolumn{3}{c}{NELL995} & \multicolumn{3}{c}{	OGBL-WIKIKG2} & \multicolumn{3}{c}{OGBL-BIOKG}                                                                                                                                                                                                                                                                                \\
			\cmidrule(lr){2-2} \cmidrule(lr){3-5} \cmidrule(lr){6-8} \cmidrule(lr){9-11} \cmidrule(lr){12-14} \cmidrule(lr){15-17}
			                            & Metrics                                 & MRR                           & Hit@1                      & Hit@3                       & MRR                              & Hit@1                          & Hit@3                      & MRR                        & Hit@1                      & Hit@3                      & MRR                        & Hit@1                      & Hit@3                      & MRR                        & Hit@1                      & Hit@3   \\
			\midrule
			\multirow{5}{*}{Emb. Based}
			                            & TransE~\cite{bordes2013translating}
			                            & 0.289                                   & 0.198                         & 0.324                      & 0.265                       & 0.058                            & 0.445                          & 0.254                      & 0.169                      & 0.271                      & 0.213                      & 0.122                      & 0.229                      & 0.317                      & 0.26                       & 0.345                                \\
			                            & DistMult~\cite{yang2014embedding}
			                            & 0.241                                   & 0.155                         & 0.263                      & 0.430                       & 0.390                            & 0.440                          & 0.267                      & 0.174                      & 0.295                      & 0.199                      & 0.115                      & 0.210                      & 0.341                      & 0.278                      & 0.363                                \\
			                            & ComplEx~\cite{trouillon2016complex}
			                            & 0.247                                   & 0.158                         & 0.275                      & 0.440                       & 0.410                            & 0.460                          & 0.227                      & 0.149                      & 0.249                      & 0.236                      & 0.136                      & 0.254                      & 0.322                      & 0.257                      & 0.360                                \\
			                            & SimplE~\cite{kazemi2018simple}
			                            & \underline{0.338}                       & \underline{0.241}             & \underline{0.375}          & \underline{0.476}           & \underline{0.428}                & \underline{0.492}              & \underline{0.291}          & 0.198                      & \underline{0.314}          & 0.220                      & 0.131                      & 0.239                      & 0.319                      & 0.246                      & 0.358                                \\
			                            & QuatE~\cite{zhang2019Quaternion}
			                            & 0.319                                   & \underline{0.241}             & 0.358                      & 0.446                       & 0.382                            & 0.478                          & 0.285                      & \underline{0.201}          & 0.307                      & \underline{0.248}          & \underline{0.139}          & \underline{0.262}          & \underline{0.363}          & \underline{0.294}          & \underline{0.382}                    \\
			\cmidrule(lr){2-17}
                \multirow{2}{*}{Rule Based}
                                        &RuleN~\cite{meilicke2018fine}    &0.453       &0.387      &0.491      &0.514      &0.461      &0.532      &0.346      &0.279      &0.366 &- &- &- &- &- &- \\
                                        &DRUM~\cite{sadeghian2019drum}     &0.447       &0.373      &0.478      &0.521      &0.458      &0.549      &0.340      &0.261      &0.363 &- &- &- &- &- &- \\
                \cmidrule(lr){2-17}
			\multirow{7}{*}{MP based}
			                            & RGCN~\cite{schlichtkrull2018modeling}
			                            & 0.427                                   & 0.367                         & 0.451                      & 0.501                       & 0.458                            & 0.519                          & 0.329                      & 0.256                      & 0.348                      & 0.285                      & 0.176                      & 0.324                      & 0.381                      & 0.319                      & 0.399                                \\
			                            & RGHAT~\cite{zhang2020relational}
			                            & 0.440                                   & 0.361                         & 0.483                      & 0.518                       & 0.460                            & 0.540                          & 0.337                      & 0.274                      & 0.351                      & 0.301                      & 0.192                      & 0.329                      & 0.395                      & 0.334                      & 0.418                                \\
			                            & GraIL~\cite{teru2020inductive}
			                            & 0.465                                   & 0.389                         & 0.482                      & 0.512                       & 0.453                            & 0.539                          & \underline{0.355}          & \underline{0.282}          & 0.367                      & 0.327                      & 0.201                      & 0.336                      & 0.434                      & 0.379                      & 0.451                                \\
                               &ConGLR~\cite{lin2022incorporating} &0.463& 0.402& 0.483& 
                                0.512& 0.452& 0.541
                               & 0.352& 0.276& 0.366& 0.318& 0.219& 0.338& 0.422& 0.365& 0.431\\

			                            & PATHCON~\cite{wang2021relational}
			                            & \underline{0.483}                       & \underline{0.425}             & \underline{0.499}          & \underline{0.522}           & \underline{0.462}                &0.546           & 0.349                      & 0.276                      & \underline{0.369}          & \underline{0.339}          & \underline{0.243}          & \underline{0.347}          & \underline{0.457}          & \underline{0.395}          & \underline{0.472}                    \\
                               &RMPI~\cite{geng2023relational} & 0.459& 0.396& 0.480 &
                               0.514& 0.454& 0.544
                               & 0.339& 0.277& 0.365& 0.313& 0.213& 0.336& 0.414& 0.358& 0.421\\
                               
                               & MBE~\cite{cui2022inductive} &0.477 &0.410 &0.495 &0.519 &0.451 &\underline{0.549}  &0.344 &0.270 &0.359 &0.331 &0.234 &0.345 &0.452 &0.380 &0.470 \\
			\cmidrule(lr){2-17}
			\multirow{3}{*}{Ours$^a$}
			                            & NORAN(GS)
			                            & \textbf{0.483}                          & \textbf{0.440}                & \textbf{0.504}             & \textbf{0.535}              & \textbf{0.471}                   & \textbf{0.564}                 & \textbf{0.364}             & \textbf{0.298}             & \textbf{0.381}             & \textbf{0.349}             & 0.237                      & \textbf{0.361}             & 0.454                      & \textbf{0.395}             & \textbf{0.479}                       \\
			                            & NORAN(GIN)
			                            & \textbf{\underline{0.499}}              & \textbf{\underline{0.451}}    & \textbf{\underline{0.519}} & \textbf{0.530}              & \textbf{0.467}                   & \textbf{0.560}                 & \textbf{0.370}             & \textbf{0.308}             & \textbf{0.379}             & \textbf{0.353}             & \textbf{0.251}             & \textbf{0.367}             & \textbf{0.469}             & \textbf{0.407}             & \textbf{0.492}                       \\
			                            & NORAN(GAT)
			                            & 0.468                                   & 0.422                         & 0.489                      & \textbf{\underline{0.540}}  & \textbf{\underline{0.499}}       & \textbf{\underline{0.575}}     & \textbf{\underline{0.374}} & \textbf{\underline{0.310}} & \textbf{\underline{0.392}} & \textbf{\underline{0.358}} & \textbf{\underline{0.260}} & \textbf{\underline{0.371}} & \textbf{\underline{0.475}} & \textbf{\underline{0.411}} & \textbf{\underline{0.495}}           \\ \cmidrule(lr){2-17}
			\textit{NORAN - Emb.}$^b$   & +2.2\% (Avg.)                           & +1.6\%                        & +2.6\%                     & +2.0\%                      & +1.8\%                           & +3.7\%                         & +2.9\%                     & +1.9\%                     & +2.8\%                     & +2.3\%                     & +1.9\%                     & +1.7\%                     & +2.4\%                     & +1.8\%                     & +1.6\%                     & +2.3\%  \\
			\textit{NORAN - GNN}$^b$    & +11.2\% (Avg.)                          & +16.1\%                       & +21.0 \%                   & +14.4\%                     & +6.4\%                           & +7.1\%                         & +8.3\%                     & +8.3\%                     & +10.9\%                    & +7.8\%                     & +11.0 \%                   & +12.1\%                    & +10.9\%                    & +11.2\%                    & +11.7\%                    & +11.5\% \\
			\bottomrule
			\multicolumn{14}{l}{$^a$ We test our NORAN with three backbones: GraphSAGE, GIN, and GAT, denoted as NORAN(GS), NORAN(GIN), NORAN(GAT), respectively. }                                                                  \\
			\multicolumn{14}{l}{$^b$ We show the margin between the best results of NORAN and the ones of the two branches of baselines methods. }
		\end{tabular}
	}
 \vspace{-3mm}
	\label{tab:main_results}
\end{table*}

\subsubsection{Baselines}
To valid the effectiveness of NORAN, we compare it with the state-of-art baselines according to our best knowledge. Following the taxonomy aforementioned in Section~\ref{sec:related_work}, we divide all baselines into three categories: $(i)$ \textit{embedding-based}: \textbf{TransE}~\cite{bordes2013translating}, \textbf{DistMult}~\cite{yang2014embedding}, \textbf{ComplEx}~\cite{trouillon2016complex}, \textbf{SimplE}~\cite{kazemi2018simple}, and QuatE~\cite{zhang2019Quaternion}; $(ii)$ \textit{Rule-based}: \textbf{RuleN}~\cite{meilicke2018fine} and \textbf{DRUM}~\cite{sadeghian2019drum}; $(iii)$ \textit{MP-based}: \textbf{RGCN}~\cite{schlichtkrull2018modeling}, \textbf{RGHAT}~\cite{zhang2020relational}, \textbf{GraIL}~\cite{teru2020inductive}, 
\textbf{ConGLR}~\cite{lin2022incorporating}, 
\textbf{PATHCON}~\cite{wang2021relational}, 
\textbf{RMPI}~\cite{geng2023relational}
and \textbf{MBE}~\cite{cui2022inductive}. Particularly, the five embedding based methods are not naturally inductive and we retrain the embedding table for inductive inference. For \textit{MP-based} methods, we have selected the most effective ones for KGC in the same setting. Notably, we did not include several related works~\cite{ontologyschema,lee2023ingram,jininductive} in our experiments since their original source codes were written only for KGC with unseen relations, while our model is designed for KGC with unseen entities.

\subsubsection{Implimentation details}
The aforementioned five benchmark datasets were originally developed for the \textit{transductive setting}. In other words, the entities of the standard test splits are a subset of the entities in the training splits. In order to facilitate inductive testing, we create new inductive benchmark datasets by sampling disjoint subgraphs from the KGs in these datasets. Specifically, we first sample a subset of entities from the test set at random, then remove them and their associated edges from the training set. The remaining training set is used to train the models, and the removed edges are added back in during evaluation.
We use \textbf{MRR} (mean reciprocal rank) and \textbf{Hit@1, 3} as evaluation metrics. 

All baseline methods used in this paper are based on their open-source implementations. All models are implemented in Pytorch and trained on an RTX 3090 with 24 RAM. To make a fair comparison, for all models, the embedding size of entities and relations is set as $100$ for the input, latent, and output layer. Exceptionally, PATHCON, which adopted the node labeling technique to generate the node feature matrix, has a distinctive size of input embedding that varies with datasets. We optimize all models with Adam optimizer, where the batch size is fixed at $256$. Specially, we run all models on the large OGBL dataset with a batch size of $32$ to avoid out-of-memory. We use the default Xavier initializer to initialize model parameters, and the initial learning rate is $0.005$. We apply a grid search for hyperparameter tuning. The learning rate is searched from $\{0.05, 0.01, 0.005, 0.001\}$,   the margin parameter used in all baselines, and our negative sampling training scheme from $0$ to $1$. We use their best configuration as the default value for other distinctive hyperparameters in the baseline methods. To reduce the randomness, we use the random seed and report the average results of three runs.

\subsection{Main Results: Q1}\label{sec:main_results}
To answer Q1, we perform extensive experiments to compare NORAN with the \textit{strongest} baselines best to our knowledge. The comparison results are placed in Table~\ref{tab:main_results}. We summarize our experimental observations as follows.

\noindent \textit{Obs. 1. NORAN significantly outperforms  SOTA KGC baselines.} We compare NORAN (ours) with $5$ representative embedding-based methods, $2$ rule-based methods and $7$ SOTA MP-based methods on $5$ KG datasets. As shown in Table~\ref{tab:main_results}, NORAN significantly outperforms all other baselines in all evaluation metrics (i.e., MRR, Hit@1, Hit@3). Particularly, we \textbf{bold} NORAN's results that are better than all baseline. We evaluate NORAN with three commonly-used backbones, and the bold results notably take the vast majority, indicating the effectiveness of our proposed NORAN. Besides, we compute the margin between the best results (\underline{underlined}) of NORAN and the two branches of baselines by column. Notably, NORAN outperforms the best embedding-based (MP-based) baseline with a large margin, averaging $11.2\%$ ($2.2\%$).

\noindent
\textit{Obs. 2. NORAN integrating GAT (learnable convolution matrix) generally outperforms the MP layers with pre-fixed convolution matrices.} We test NORAN with three backbones: GAT, GraphSAGE, and GIN, where the first one's convolution matrix is learnable and the last two pre-fix their convolution matrices. As shown in Table~\ref{tab:main_results}, GAT achieves the best results (\underline{underlined}) on four datasets while GraphSAGE and GIN have an on-par performance. 

\begin{table*}[t]
	\centering
	\caption{Results of ablation study for relation network construction rules on three benchmark datasets.}
	\label{tab:ablation_relation_network_app}
 \vspace{-4mm}
	\resizebox{.9\textwidth}{!}{
		\begin{tabular}{clcccccccccc}
			\toprule
			\multirow{2}{*}{Construction Rule} & Datasets   & \multicolumn{3}{c}{WN18RR} & \multicolumn{3}{c}{NELL995} & \multicolumn{3}{c}{	OGBL-WIKIKG2}                                                 \\
			\cmidrule(lr){2-2} \cmidrule(lr){3-5} \cmidrule(lr){6-8} \cmidrule(lr){9-11}
			                                   & Metrics    & MRR                        & Hit@1                       & Hit@3                            & MRR   & Hit@1 & Hit@3 & MRR   & Hit@1 & Hit@3 \\
			\midrule
			\multirow{1}{*}{Default}
			                                   & NORAN(GAT)
			                                   & 0.540      & 0.499                      & 0.575                       & 0.374                            & 0.310 & 0.392 & 0.358 & 0.260 & 0.371         \\
			\cmidrule(lr){2-11}

			\multirow{1}{*}{w.o. head-head}
			                                   & NORAN(GAT)
			                                   & 0.521      & 0.457                      & 0.558                       & 0.348                            & 0.273 & 0.375 & 0.329 & 0.230 & 0.341         \\
			\cmidrule(lr){2-11}

			\multirow{1}{*}{w.o. tail-tail}
			                                   & NORAN(GAT)
			                                   & 0.517      & 0.454                      & 0.549                       & 0.351                            & 0.288 & 0.364 & 0.327 & 0.224 & 0.342         \\
			\cmidrule(lr){2-11}

			\multirow{1}{*}{w.o. head-tail}
			                                   & NORAN(GAT)
			                                   & 0.501      & 0.446                      & 0.532                       & 0.339                            & 0.275 & 0.359 & 0.314 & 0.212 & 0.319         \\

			\bottomrule
		\end{tabular}}
\vspace{-2mm}
\end{table*}

\begin{table*}[t]
	\centering
	\caption{Ablation study of training objective, i.e., \textit{Logic Evidence Information Maximization} (LEIM)}
	\label{tab:ablation_I_app}
 \vspace{-4mm}
		\resizebox{0.9\textwidth}{!}{
	\begin{tabular}{clcccccccccc}
		\toprule
		\multirow{2}{*}{Training Objective} & Datasets            & \multicolumn{3}{c}{WN18RR} & \multicolumn{3}{c}{NELL995} & \multicolumn{3}{c}{	OGBL-WIKIKG2}                                                 \\
		\cmidrule(lr){2-2} \cmidrule(lr){3-5} \cmidrule(lr){6-8} \cmidrule(lr){9-11}
		                                   & Metrics             & MRR                        & Hit@1                       & Hit@3                            & MRR   & Hit@1 & Hit@3 & MRR   & Hit@1 & Hit@3 \\
		\midrule
		\multirow{3}{*}{\shortstack{Naive                                                                                                                                                            \\ Negative Sampling}}
		                                   & NORAN(GS)
		                                   & 0.529               & 0.469                      & 0.554                       & 0.351                            & 0.290 & 0.357 & 0.324 & 0.215 & 0.322         \\
		                                   & NORAN(GIN)
		                                   & 0.537               & 0.490                      & 0.567                       & 0.357                            & 0.291 & 0.374 & 0.335 & 0.231 & 0.349         \\
		                                   & NORAN(GAT)
		                                   & 0.534               & 0.481                      & 0.562                       & 0.361                            & 0.299 & 0.377 & 0.342 & 0.238 & 0.355         \\

		\cmidrule(lr){2-11}

		\multirow{3}{*}{LEIM (JSD)}
		                                   & NORAN(GS)
		                                   & 0.535               & 0.471                      & 0.564                       & 0.364                            & 0.298 & 0.381 & 0.349 & 0.237 & 0.361         \\
		                                   & NORAN(GIN)
		                                   & 0.530               & 0.467                      & 0.560                       & 0.370                            & 0.308 & 0.379 & 0.353 & 0.251 & 0.367         \\
		                                   & NORAN(GAT)
		                                   & 0.540               & 0.499                      & 0.575                       & 0.374                            & 0.310 & 0.392 & 0.358 & 0.260 & 0.371         \\
		\cmidrule(lr){2-11}

		\multirow{3}{*}{LEIM (InfoNCE)}
		                                   & NORAN(GS)
		                                   & 0.532               & 0.476                      & 0.558                       & 0.358                            & 0.294 & 0.369 & 0.331 & 0.220 & 0.343         \\
		                                   & NORAN(GIN)
		                                   & 0.541               & 0.495                      & 0.583                       & 0.365                            & 0.301 & 0.376 & 0.339 & 0.231 & 0.359         \\
		                                   & NORAN(GAT)
		                                   & 0.544               & 0.506                      & 0.581                       & 0.367                            & 0.306 & 0.383 & 0.337 & 0.232 & 0.352         \\

		\bottomrule
	\end{tabular}
 \vspace{-5mm}
		}
\end{table*}

\subsection{Ablation Study on Relation Network: Q2}\label{sec:ablation_relation_network}
According to the definition of the relation network, there are two latent construction particulars corresponding to two construction issues: $(i)$ the semantics of various `\textit{entity sharing}' patterns (Remark~\ref{remark: linking patterns}) and $(ii)$ the semantics of the `\textit{linking direction}' (Remark~\ref{remark: linking directions}). We conduct a detailed ablation study in this section to validate our arguments in Remark~\ref{remark: linking patterns} and ~\ref{remark: linking directions}.
\begin{remark}\label{remark: linking patterns}
	\textbf{Linking pattern}. For any two triples sharing entities, i.e. $T_1=(h_1, r_1, t_1) \cap T_2=(h_2, r_2, t_2)$, we have three linking patterns: $(i)$ head-head sharing~($h_1 = h_2)$, $(ii)$ tail-tail sharing~($t_1 = t_2$), and $(iii)$ head-tail sharing~($t_1 = h_2 \oplus t_2=h_1$). For our construction criterion, we build links for any of the three linking patterns based on the rationale that the three patterns possess different semantics.
\end{remark}
\begin{remark}\label{remark: linking directions}
	\textbf{Linking direction}. According to Def.~\ref{def:relational_graph}, for different linking patterns discussed in Remark~\ref{remark: linking patterns}, we uniformly set the according edges in relation network as bidirectional edges, omitting the linking directions. The rationale here is that the semantics of relations in knowledge graphs are not direction-sensitive. For instance, a typical triple \textit{(Elon Musk, :WorkIn, TESLA)} in Figure~\ref{fig:illustration} (i) is semantically identical to \textit{(TESLA, :Foundedby, Elon Musk)}.
\end{remark}

To answer Q2, we perform a thorough ablation study for relation network construction. For remark~\ref{remark: linking patterns}, recalling that \textit{relation networks} have three patterns: \textit{head-head sharing}, \textit{head-tail sharing}, and \textit{tail-tail sharing}, we iteratively ablate each pattern to construct the corresponding relation network and then use it to train our model. We do this ablation study using GAT as the backbone. The empirical results are shown on Table~\ref{tab:ablation_relation_network_app}. As we can see, ablating any of the three patterns from relation networks would markedly damage the performance of NORAN, indicating the validness of our construction rule for relation networks.

\noindent \textit{Obs. 3. Head-tail pattern weights more than the others for relation network construction.} Particularly, we observe that ablating head-tail patterns would reduce the performance with a large margin in comparison with the other ablations. Actually, the head-tail pattern corresponds to the translation-based logic rules and dominates multi-hop knowledge graph reasoning, while the other two corresponds, i.e., \textit{head-head} and \textit{tail-tail} logically correspond to the union and intersection operations, respectively, as defined in complex knowledge graph reasoning~\cite{ren2020query2box}. Admittedly, the experiment results indicate that the translation-based logic evidence (head-tail pattern) are crucial for inductive inference, which is actually the same in logic with multi-hop knowledge graph reasoning. However, according to our results, the other logic operation (\textit{Union} and \textit{Intersection}) can also facilitate KGC, which is nevertheless ignored by translation-based embedding methods, which, in a way, explains their relatively worse inductive performance as shown in Table~\ref{tab:main_results}.

\subsection{Ablation Study for Training Objective: Q3}\label{sec:ablation_I}
    To answer Q3, we perform an ablation study to demonstrate the effectiveness of LEIM. Specifically, we introduce another InfoNCE estimator~\cite{hjelm2018learning} which is defined as follows:
\begin{multline}\label{equ:infonce_mi}\small
	\mathcal{I}^{InfoNCE}_{\omega, \psi}(\Lambda, \Psi) = \mathbb{E}_\mathbb{P}\Bigg(T_{\omega, \psi}(\Lambda^k(v), \Psi^k(v)) \\ - \mathbb{E}_{\mathbb{P}'}\Big(\log \sum_{v'\sim\mathbb{P}'}e^{T_{\omega, \psi}(\Lambda^k(v'), \Psi^k(v))}\Big)\Bigg).
\end{multline}
Then we evaluate LEIM with two MI estimators, i.e., JSD (our default) and InfoNCE, and compare them to the naive negative sampling (NS) training objective, which is the common practice in previous works~\cite{wang2021relational,schlichtkrull2018modeling}. Selected experiment results are on Table~\ref{tab:ablation_I_app}. As shown in Table~\ref{tab:ablation_I_app}, we bold the best results by column. Obviously, LEIM generally outperforms Naive NS with a large margin for all three backbones, demonstrating the effectiveness of our proposed training objective LEIM. The key difference between JSD and InfoNCE is how \textit{negative sampling} is performed. In practice, for JSD, we perform a $1:1$ negative sampling within a batch and compute the expectation of the data distribution (positive samples) and a noise distribution (negative samples) separately. For InfoNCE --- that is a typical \textit{contrastive learning} style --- it requires numerous negative samples to be competitive, where the common practice is utilizing all the other instances with a batch \(\mathcal{B}\) as the negative samples. For a batch size \(|\mathcal{B}|\), it gives \(\mathcal{O}(|\mathcal{B}|)\) negative samples per positive sample. Considering the computing efficiency and our empirical evaluation in \ref{sec:ablation_I}, we choose the JSD as our default instantiation. In addition, for the inter-comparison among training objectives, we found one interesting phenomenon:

\noindent
\textit{Obs. 4. GAT generally benefits the most from LEIM.} On three datasets, Through LEIM, GAT gains a performance increase by an averaging margin of $1.2\%$ in MRR and $1.5\%$ in Hit@3, which is larger than the other backbones. Specifically, On WN18RR, GAT performs worst among three training objectives when Naive NS trains them. However, when applying LEIM, GAT outstandingly achieves the best result. In contrast, GIN with a prefixed convolution matrix even has a negative boosting when trained with LEIM.

\section{Conclusions and Future work}

Inductive KG completion tries to infer relations for newly-coming entities is more inline with reality as real-world KGs are always evolving and introducing new knowledge.  In this work, we proposed a novel message-passing framework, i.e., NORAN, which aims to mine latent relation semantics for inductive KG completion. Initially, we propose a distinctive perspective on KG modeling termed \textit{relation network}, which considers relations as indicators of logical rules. Moreover, to preserve such logical semantics inherent in KGs, we propose an innovative informax training objective, \textit{logic evidence information maximization} (LEIM). The combination of the relation network and LEIM forms our NORAN framework for inductive KGC.
Since our framework is model-agnostic, we also provide a detailed theoretical analysis to explore which kind of MP model is more effective for the relation network and provide guidelines
for model selection to enable inductive reasoning.
Extensive experiment results on five KG benchmarks demonstrate the superiority of our proposed
NORAN over state-of-the-art competitors.
As part of our future work, we intend to explore the potential applications of our approach in real-world scenarios, such as KG-enhanced recommendation systems and question-answering.

\section{Acknowledgements}
 The work described in this paper was fully supported by a grant from the Research Grants Council of the Hong Kong Special Administrative Region, China (Project No. PolyU 25208322).
\balance 
\bibliographystyle{unsrt}
\bibliography{main.bib}

\end{document}